\documentclass{article}

\usepackage{arxiv}

\usepackage[utf8]{inputenc} 
\usepackage[T1]{fontenc}    
\usepackage{hyperref}       
\usepackage{url}            
\usepackage{booktabs}       
\usepackage{amsfonts}       
\usepackage{nicefrac}       
\usepackage{microtype}      
\usepackage{lipsum}

\usepackage{url}
\usepackage{color}
\usepackage{algorithmic,algorithm}
\usepackage{graphicx}
\usepackage{subfigure}

\usepackage{amsmath,amssymb,bbm,amsthm}
\newtheorem{theorem}{Theorem}

\newtheorem{lemma}[theorem]{Lemma}
\newtheorem{definition}{Definition}


\author{
   Shijie Xu\qquad Jiayan Fang\qquad Xiang-Yang Li\thanks{Corresponding author.}\\
  School of Computer Science and Technology\\
  University of Science and Technology of China\\
  230027, Hefei, Anhui, P.R.China \\
  \texttt{\{xushijie,jyfang\}@mail.ustc.edu.cn, xiangyangli@ustc.edu.cn} \\
}

\title{Weighted Laplacian and Its Theoretical Applications}

 \begin{document}
%
\maketitle
\begin{abstract}
\begin{quote}
In this paper, we develop a novel weighted Laplacian method, which is partially inspired by the theory of graph Laplacian, to study recent popular graph problems, such as multilevel graph partitioning and balanced minimum cut problem, in a more convenient manner. Since the weighted Laplacian strategy inherits the virtues of spectral methods, graph algorithms designed using weighted Laplacian will necessarily possess more robust theoretical guarantees for algorithmic performances, comparing with those existing algorithms that are heuristically proposed. In order to illustrate its powerful utility both in theory and in practice, we also present two effective applications of our weighted Laplacian method to multilevel graph partitioning and balanced minimum cut problem, respectively. By means of variational methods and theory of partial differential equations (PDEs), 
we have established the equivalence relations among the weighted cut problem, balanced minimum cut problem and the initial clustering problem that arises in the middle stage of graph partitioning algorithms under a multilevel structure. These equivalence relations can indeed provide solid theoretical support for algorithms based on our proposed weighted Laplacian strategy. Moreover, from the perspective of the application to the balanced minimum cut problem, weighted Laplacian can make it possible for research of numerical solutions of PDEs to be a powerful tool for the algorithmic study of graph problems. Experimental results also indicate that the algorithm embedded with our strategy indeed outperforms other existing graph algorithms, especially in terms of accuracy, thus verifying the efficacy of the proposed weighted Laplacian.
\end{quote}
\end{abstract}

\section{Introduction}
In graph theory, the traditional weighted graph consists only of weights on its edges. Although weighting edges has several practical uses such as scientific simulation, social networks and integrated circuit design, weighting vertices can also serve as an important role for many purposes, e.g. the Hosoya polynomial and Wiener index \cite{wiener1947structural} of vertex-weighted graph has been well studied since its extensive applications in chemical graph theory. The \textit{doubly-weighted} graph --- a graph in which both edges and vertices are weighted --- however, is very different from the above two versions of graph and proves to have vital potentials in solving some detailed issues which have not yet studied clearly in some practical problems.

Partial differential equations (PDEs) in graph analysis has been well studied over decades and has gradually developed into a powerful framework to help us understand various phenomena that might occur on graphs. One of particular success lies in the field of spectral graph theory, in which the Laplacian operator has been always playing a crucial role in dealing with some popular problems about graphs.
Moreover, the Laplacian operator, acting as a core operator for many second-order PDEs such as Laplace's equations, Poisson's equations, and evolution equations, has already given birth to the emergence of various mighty techniques in the area of graph analysis. In some applications to graph processing, these equations have been discretized by resorting to several approximate methods including finite elements, finite volumes and Monte Carlo simulation \cite{sun2009concise}.
Such kind of Laplacian operators capable of coping with graph problems are uniformly termed as graph Laplacian.
Some notions like $p$-Laplacian and $\infty$-Laplacian have been used to describe some important process in physics, biology, or economy \cite{drabek2007p,oberman2013finite}. Additionally, the nonlocal $p$-Laplacian has gained growing interest in various areas like mathematical biology, peridynamics, and image processing \cite{andreu2010nonlocal,elmoataz2015p}. 

As a successful application to graph Laplacian, 
spectral graph theory has evolved to be a sort of powerful techniques in the topic of clustering and has been developed systematically over decades \cite{chung1997spectral}. It was deemed to build originally from the minimum cut problem on the graph, and another motivation might come from the graph energy problem \cite{li2012graph}, in which the cut of the graph partition can be regarded as the energy of partition \cite{kolmogorov2002energy}. 
Since most applications to graph analysis are based on discretization methods, spectral methods based on graph Laplacian have been sufficiently exploited for minimizing some particular graph partitioning objective functions, such as the cheeger cut 
\cite{li1979lower,chung1997spectral}, the ratio cut \cite{hagen1992new} and the normalized cut \cite{shi2000normalized}, and can always return the global optimal solution to relaxation of these objective functions. These objective functions also fulfill to establish significant connections among various practical problems due to their strong mathematical background.
For example, there exists several important equivalence relations between the total variation problem and the ratio cut problem \cite{szlam2010total}, the weighted kernel $k$-means problem and the normalized cut problem \cite{dhillon2004unified}, and the normalized cut problem and the modularity maximization problem based on spectral methods in community detection \cite{newman2013community}, respectively.

Considering that most existing graph Laplacians are built on the form of either edge-only-weighted graphs or vertex-only-weighted graphs \cite{chung1996combinatorial,knisley2014vertex}, techniques developed from these graph Laplacians may suffer from limited applications when confronted with more complex situations. On account of vital potentials of doubly-weighted graphs in coping with some recently important unsolved problems \cite{carron2016hitting,kim2019harmonic}, we propose the novel weighted Laplacian method inspired by existing theory of graph Laplacian to make it possible for the Laplacian operator suitable for doubly-weighted graphs, for the purpose of extending the practical utility of graph Laplacians. In order to further demonstrate advantages and wide potential use of our proposed strategy, we provide two theoretical applications of the weighted Laplacian method respectively to the design of clustering algorithms for the multilevel graph partitioning problem and the balanced minimum cut problem. The main contributions of this paper are listed as follows:
\begin{enumerate}
    \item We create the notion of weighted Laplacian in the context of doubly-weighted graphs by resorting to the theory of partial differential equations on graphs, and further propose the weighted Laplacian method by the inspiration of existing spectral methods.
    \item For the purpose of demonstrating the theory value of our idea, we give the first application of the weighted Laplacian method to the balanced minimum cut problem. We provide rigorous proof from the perspective of PDEs about the equivalence between the weighted cut problem and the balanced minimum cut problem both in their relaxed versions, thus revealing the possibility that the mature research of numerical solutions of PDEs will be of great assistance in recent studies of graph problems.
    \item As a second application, we also show how to embed our weighted Laplacian method into the design of algorithms for multilevel graph partitioning problems. In order to illustrate the practical power of the proposed idea, we prove that the weighted cut problem is essentially equivalent to the initial clustering problem that appears in the second phase of those graph partitioning algorithms with multilevel structure. Furthermore, we post the description of the multilevel graph partitioning algorithm based on the proposed strategy.
\end{enumerate}

The rest of the paper is organized as follows. Section \ref{sec:related} provides a brief overview on some related work including the balanced minimum cut problem and multilevel structure of the graph partitioning problem. In section \ref{sec:prelim}, we give the definition of weighted Laplacian and its important properties, then the weighted Laplacian method is proposed later. Section \ref{sec:thy} will mainly dwell on providing two theoretical applications of our weighted Laplacian method and the detailed practical algorithm of one application will be posted. Section \ref{sec:expr} reports the performance of our algorithm. Finally, we conclude in section \ref{sec:conc}.
\section{Related Work}\label{sec:related}
Before diving into the details of our work, we first take a brief overview on some excellent research that are closely related to our method.
\subsection{Graph Laplacian and Balanced Minimum Cut Problem} 
The theoretical foundations of spectral graph theory stem originally from the work of \cite{hall1970r}, and were further developed in decades. Graph Laplacians lie in the heart of major spectral methods. There are several different Laplacians based on edge-only-weighted graphs given in related literatures, e.g. the unnormalized Laplacian
\begin{equation*}
    L:=D-W,
\end{equation*}
the normalized Laplacian
\begin{equation*}
    L_{\mathrm{N}}:=I-D^{-1/2}WD^{-1/2},
\end{equation*}
and the random walk Laplacian
\begin{equation*}
    L_{\mathrm{rw}}:=I-D^{-1}W,
\end{equation*}
where $W=\{W_{ij}\}$ and $D=\mathrm{diag}\{d_i\}$ are the weight matrix and the degree matrix of the graph respectively. Moreover, the vertex-only-weighted graph Laplacians have also been continually studied \cite{friedman2004calculus,knisley2014vertex,shi2016weighted}. All of the practical Laplacians mentioned above, however, consider either the edge-only-weighted or vertex-only-weighted graphs. Considering that the minimum cut problem becomes increasingly important in the context of doubly-weighted graphs, to which most existing algorithms are unfortunately difficult to be extended \cite{liu2014weighted}, it is worthwhile to develop a new kind of graph Laplacian suitable for doubly-weighted graphs.

In recent years, the balanced minimum cut problem also constantly serves as an important role in various practical situations. There exists a number of different ways to define a series of balance conditions which act as a class of constraints in the balanced minimum cut optimization problem. Based on some particular definitions of the balance conditions, a balanced partition can be produced as a solution of the balanced minimum cut problem via some certain algorithm. See the work of \cite{chen2017self,liu2017balanced} for further references. The notion of balanced minimum cut is actually ubiquitous in many graph problems for the reason that some graph partitioning objective functions in spectral graph theory can be essentially regarded as a sort of ``balanced'' minimum cut, in whose corresponding optimization problem, each cut term expressed in the ratio form respectively has a predefined balance condition as its denominator, such as the cardinality of each partition (ratio cut) or the volume of each partition (normalized cut). Besides, the definition of the perfectly balanced minimum cut can be found in \cite{andreev2006balanced}.

\subsection{Multilevel Graph Partitioning}
Under the background of multilevel graph partitioning (also called V-cycle), a plenty of heuristics with different nature have recently been successively developed due to its practical significance 
\cite{ubaru2019sampling}
in order to design efficient approximation algorithms for graph partitioning problems with reasonable computational time. However, not until the general-purpose multilevel methods were put forward, had not the field of graph partitioning undertaken a truly breakthrough in aspects of both efficiency and partition quality \cite{safro2015advanced}. A specific multilevel graph partitioning algorithm consists of three phases: coarsening --- where the problem instance is gradually mapped to a smaller one to reduce the original complexity, initial clustering --- where the coarsening graph is partitioned by some specific clustering algorithm, and refining --- where the partition for original graph is inversely refined from coarsened partitioning, as shown in figure \ref{fig:multi}.

\begin{figure}[t]
    \centering
    \includegraphics[width=0.5\textwidth]{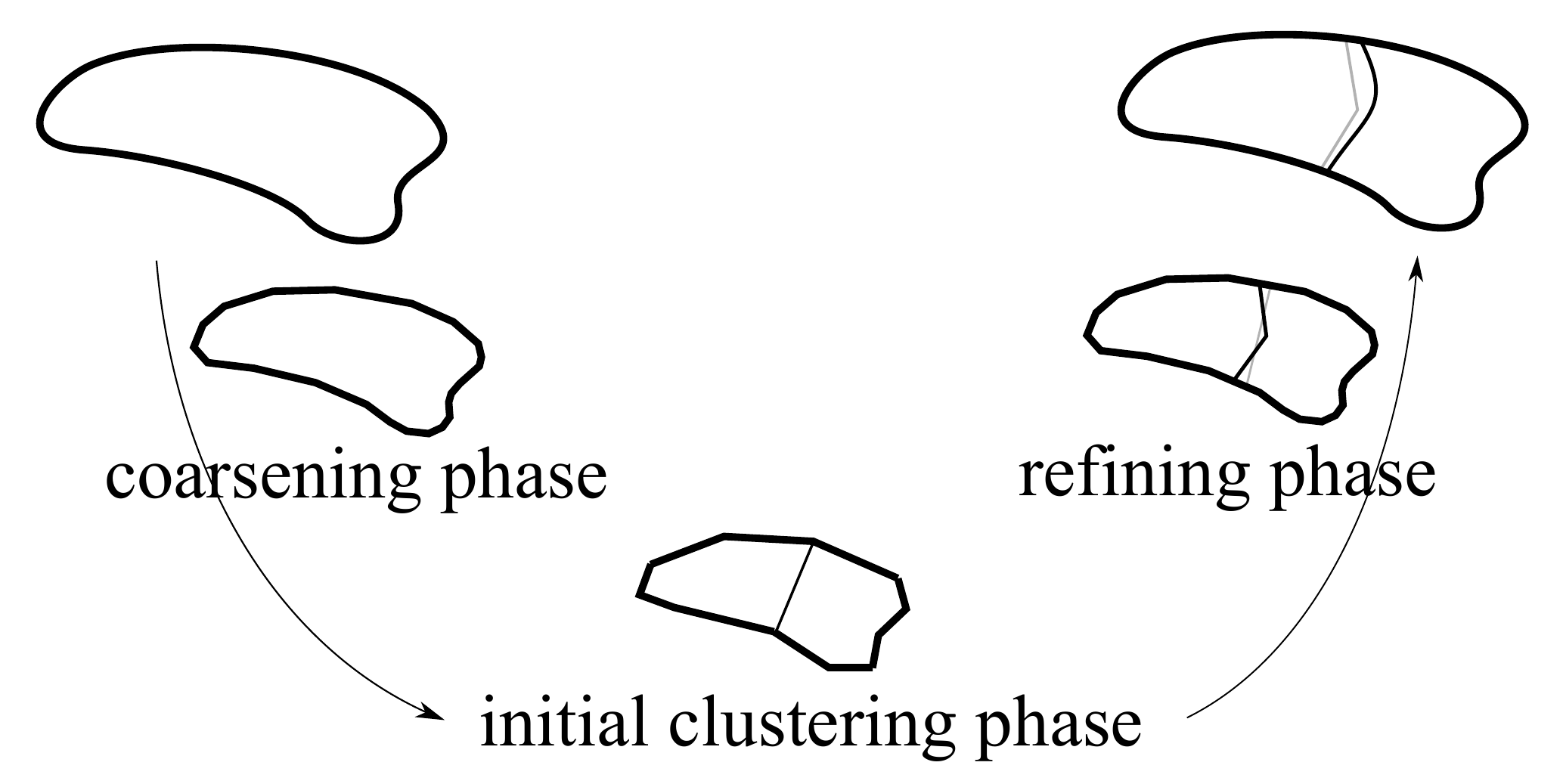}
    \caption{Three phases in multilevel graph partitioning: the large graph is coarsened during the coarsening phase; an initial clustering algorithm is running on the coarsened graph to obtain a coarsened partition; finer partition is produced during the refining phase.}
    \label{fig:multi}
\end{figure}

\subsubsection{Coarsening phase}
We denote $G_i$ as the $i$-th graph and $V_i$ as the corresponding vertex set of the $i$-th graph.  Starting with one original graph $G_0$, a specific coarsening algorithm repeatedly transforms the graph into smaller and smaller graphs $G_1,G_2,\ldots,G_m$ such that $|V_0|>|V_1|>\ldots>|V_m|$. In order to obtain a coarser graph from $G_i$ to $G_{i+1}$, all nodes in $G_i$ will be partitioned by some rules into several groups, each of which can be regarded as a supernode in $G_{i+1}$. Several criteria for grouping the nodes are proposed in \cite{dhillon2004unified,holtgrewe2010engineering}. In our settings, when combining a set of nodes into a single supernode, the edge weight of the supernode is taken to be the sum of the edge weights of the original nodes that comprise this supernode. Similarly, the degree of a supernode is taken to be the sum of the degrees of all original nodes that are contained in it. 
See more details in \cite{safro2015advanced}.

\subsubsection{Initial clustering phase}
Several initial clustering algorithms have been proposed and further developed, e.g. the region-growing algorithm \cite{karypis1998fast,sanders2013think,predari2016k}, 
the recursive bisection algorithm \cite{barnard1994fast,aykanat2008multi}, spectral clustering algorithm and weighted kernel algorithm \cite{dhillon2004unified,dhillon2007weighted}. 
However, all of the above initial clustering algorithms are heuristic, thus failing to provide qualitative guarantee for partitioning results of coarsened graphs. Then, a better initial method is immediately desired, considering that the coarsened graph partition produced by the initial clustering phase, as a intermediate result in the whole multilevel algorithmic framework, can certainly exert significant impacts on the subsequent refining phase in aspects of both time complexity and partition accuracy.

\subsubsection{Refining phase}

As the final step of the multilevel framework, given a partition of graph $G_i$, the refining phase forms a finer partition of the graph $G_{i-1}$ where $G_i$ is a coarsened version of $G_{i-1}$.
If there is a partition of $G_i$, which naturally yields a partition of $G_{i-1}$ by projecting, then we run the refining algorithm on the $G_{i-1}$ to get a finer partition. 
The Kernighan-Lin objective \cite{lin1973effective} is used to search the local minima by swapping points between different partitions. 
\cite{sanders2013think} also designed a local search algorithm to refine the coarsened partition based on negative cycle detection, where a negative cycle corresponds to a set of node movements that will not only decrease the overall cut but also maintain the balance of a partition as well.
More popular refining algorithms can be found in \cite{safro2015advanced}.

\section{Weighted Laplacian}\label{sec:prelim}
In this section, we propose the notion of weighted Laplacian, which is an extension of graph Laplacian and has similar properties on the doubly-weighted graph as traditional graph Laplacian. 
We first give several important definitions and lemmas, then propose our weighted Laplacian method.

\subsection{Some Definitions and Lemmas}
\begin{definition}[Doubly-weighted graph]\label{def:dou}
Let $G=(V,M,W)$ be a connected undirected graph with doubly weight where $V=\{1,2,\ldots,n\}$ is the vertex set of $G$, and vertex-weight matrix $M=\mathrm{diag}\{m_1,m_2,\ldots,m_n\}$ s.t. $m_i>0$ and edge-weight matrix $W=\{W_{ij}\}$. Let $D=\mathrm{diag}\{d_1,d_2,\ldots,d_n\}$ be the degree matrix of the graph where $d_i=\sum_j W_{ij}$. We say $G=(V,M,W)$ is a doubly-weighted graph.

\end{definition}

\begin{definition}[Weighted cut]
Suppose $A,B$ are two disjoint subsets of $V$, the cut of $A$ and $B$ on the graph $G$ is $\mathrm{Cut}(A,B):=\sum_{i\in A,j\in B}W_{ij}$. Let $\pi=\{C_1,C_2,\ldots,C_k\}$ be a $k$-partition of the graph $G$, i.e. $V=C_1\cup C_2\cup\ldots\cup C_k$ and $C_i\cap C_j=\emptyset$ for any $i\neq j$. 
We define the weighted cut of the partition $\pi$ as follows
\begin{equation*}
    \mathrm{Wcut}(\pi):=\sum_{i=1}^k\frac{\mathrm{Cut}(C_i,\bar{C_i})}{\mathrm{mvol}(C_i)}
\end{equation*}
where $\mathrm{mvol}(C_i)=\sum_{x\in C_i}m_x$. Besides, we recall the definition of the normalized cut as follows,
\begin{equation*}
    \mathrm{Ncut}(\pi):=\sum_{i=1}^k\frac{\mathrm{Cut}(C_i,\bar{C_i})}{\mathrm{vol}(C_i)}
\end{equation*}
where $\mathrm{vol}(C_i)=\sum_{x\in C_i}d_x$.
\end{definition}
The \textit{weighted cut problem} is aiming to minimize the above weighted cut for all partition $\pi$ on the doubly-weighted graph. In order to find the optimal solution of this minimization problem, we employ the theory of partial differential equations on graphs. 
We first give some related definitions and lemmas in the following, then naturally introduce our weighted Laplacian method and show how to apply it in the weighted cut problem.

\begin{definition}[Weighted Laplacian]
Suppose $G=(V,M,W)$ is a doubly-weighted graph. Let $\mathcal{G}$ be the linear space of all functions $f:V\to\mathbb{R}$, we define
the gradient of $f$ as a vector $\nabla f:=((f(y)-f(x))\sqrt{\frac{W_{xy}}{m_x}})_{y\in V}$, and the weighted Laplacian $\Delta$ is an operator in $\mathcal{G}$ defined as $\Delta f:=\sum_{y\in V}(f(x)-f(y))\frac{W_{xy}}{m_x}$. The integral of $f$ is defined as $\int f:=\sum_{x\in V}f(x)m_x$, and the inner product in $\mathcal{G}$ is defined as $\left<f,g\right>=\int fg$ for all $f,g\in\mathcal{G}$.
\end{definition}

Following lemma gives an important property of the weighted Laplacian. 
\begin{lemma}\label{lem:equiv}
     $\Delta$ is equivalent to the weighted Laplacian matrix 
    \begin{equation*}
            L_M=M^{-1/2}(D-W)M^{-1/2}\in\mathbb{R}^{n\times n}.
    \end{equation*}
\end{lemma}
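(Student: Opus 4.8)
The plan is first to pin down the sense in which $\Delta$ and $L_M$ are ``equivalent''. Identify each $f\in\mathcal{G}$ with its coordinate column vector $\mathbf{f}=(f(1),\dots,f(n))^{\top}\in\mathbb{R}^{n}$; then the inner product on $\mathcal{G}$ defined above becomes $\langle f,g\rangle=\mathbf{f}^{\top}M\mathbf{g}$. Since every $m_i>0$, the matrix $M^{1/2}$ is well defined and invertible, so the map $\Phi:\mathcal{G}\to\mathbb{R}^{n}$, $\Phi(f)=M^{1/2}\mathbf{f}$, is an \emph{isometry} from $(\mathcal{G},\langle\cdot,\cdot\rangle)$ onto $\mathbb{R}^{n}$ equipped with the standard Euclidean inner product (equivalently, $\Phi$ sends the $\langle\cdot,\cdot\rangle$-orthonormal basis $\{m_i^{-1/2}\mathbbm{1}_{\{i\}}\}_{i\in V}$ to the standard basis of $\mathbb{R}^{n}$). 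The statement to prove is then that $\Delta$ is carried to $L_{M}$ by this isometry, i.e.\ $\Phi\circ\Delta=L_{M}\circ\Phi$ on $\mathcal{G}$; put differently, $L_{M}$ is the matrix of the self-adjoint operator $\Delta$ in that orthonormal basis.

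Next I would compute $\Delta f$ componentwise straight from its definition. Splitting the defining sum and using $d_x=\sum_{y}W_{xy}$ from Definition \ref{def:dou}, one gets $(\Delta f)(x)=\frac{1}{m_x}\bigl(f(x)\sum_{y}W_{xy}-\sum_{y}W_{xy}f(y)\bigr)=\frac{1}{m_x}\bigl(d_x f(x)-(W\mathbf{f})(x)\bigr)$, i.e., as a vector identity, $\Delta f=M^{-1}(D-W)\mathbf{f}$. Conjugating by the isometry finishes the argument: $\Phi(\Delta f)=M^{1/2}\,\Delta f=M^{1/2}M^{-1}(D-W)\mathbf{f}=M^{-1/2}(D-W)M^{-1/2}\bigl(M^{1/2}\mathbf{f}\bigr)=L_{M}\,\Phi(f)$ for every $f\in\mathcal{G}$, which is exactly the claimed equivalence. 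As a sanity check I would also verify symmetry and sign: $L_M$ is symmetric because $D-W$ is, and correspondingly $\langle\Delta f,g\rangle=\mathbf{f}^{\top}(D-W)\mathbf{g}=\langle f,\Delta g\rangle$, while the quadratic form $\langle\Delta f,f\rangle=\frac12\sum_{x,y}W_{xy}(f(x)-f(y))^{2}=\frac12\langle\nabla f,\nabla f\rangle\ge 0$ shows that $\Delta$ (hence $L_M$) is positive semidefinite, consistent with the matrix computation.

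There is no real obstacle here, since every step is a one-line manipulation; the only thing requiring care is the weighted inner product. One must not simply read off the matrix of $\Delta$ in the standard basis and call it $L_M$ --- that matrix is $M^{-1}(D-W)$, which is merely \emph{similar} to $L_M$ rather than equal to it. The content of the lemma is that, with respect to the mass-weighted inner product that $\mathcal{G}$ actually carries, this similarity upgrades to a \emph{unitary} equivalence, and exhibiting the isometry $\Phi$ (equivalently, choosing the right orthonormal basis) is precisely what makes the statement rigorous.
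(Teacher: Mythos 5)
Your proposal is correct and rests on the same key idea as the paper's proof: both identify $L_M$ as the matrix of $\Delta$ with respect to the basis $\{m_i^{-1/2}\delta_i\}$, which is orthonormal for the mass-weighted inner product. The paper computes the entries $\langle m_i^{-1/2}\delta_i,\Delta(m_j^{-1/2}\delta_j)\rangle$ directly, whereas you derive the coordinate identity $\Delta f = M^{-1}(D-W)\mathbf{f}$ and conjugate by the isometry $\Phi$; this is only a difference in presentation, though your explicit remark that the standard-basis matrix $M^{-1}(D-W)$ is merely similar (not equal) to $L_M$ is a worthwhile clarification of what ``equivalent'' must mean here.
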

\begin{proof}
    Consider a group of normalized orthogonal bases $\{\delta_i/\sqrt{m_i}\}$ of $\mathcal{G}$ where $\delta_i(x)=1$ if $x=i$ and zero otherwise. We have
    \begin{equation*}
    \begin{split}
        \Delta(i,j)=&\left<\frac{\delta_i}{\sqrt{m_i}},\Delta(\frac{\delta_j}{\sqrt{m_j}})\right>\\
        =&\frac{1}{\sqrt{m_im_j}}\sum_x\sum_y\delta_i(x)(\delta_j(x)-\delta_j(y))W_{xy}\\
        =&\frac{1}{\sqrt{m_im_j}}\sum_y(\delta_j(i)-\delta_j(y))W_{iy}\\
        =&\frac{1}{\sqrt{m_im_j}}(\delta_j(i)\sum_yW_{iy}-\sum_y\delta_j(y)W_{iy})\\
        =&\frac{1}{\sqrt{m_im_j}}(\delta_j(i)d_i-W_{ij})\\
        =&\begin{cases}\displaystyle
                 \frac{-W_{ij}}{\sqrt{m_im_j}},       &  i\neq j\\
        \displaystyle\frac{d_i-W_{ii}}{\sqrt{m_im_i}},      & i=j
        \end{cases}\\
        =&L_M(i,j),
    \end{split}
    \end{equation*}
    therefore $\Delta$ is equivalent to $L_M$. Notice that when $M=I$ or $M=D$ weighted Laplacian becomes unnormalized Laplacian or normalized Laplacian.
\end{proof}

\begin{lemma}\label{lem:ray}
Suppose $\mathbbm{1}_{C_i}\in\mathcal{G}$ be the indicating function of $C_i$, i.e. $\mathbbm{1}_{C_i}(x)=1$ if $x\in C_i$, and zero otherwise. We have
\begin{equation*}
\begin{split}
        \mathrm{Cut}(C_i,\bar{C_i})=\int |\nabla \mathbbm{1}_{C_i}|^2,\quad\mathrm{mvol}(C_i)=\int \mathbbm{1}_{C_i}^2,\\
    \left<f,\Delta f\right>=\int|\nabla f|^2,\quad \left<f,f\right>=\int f^2\quad\mathrm{for~all~}f\in\mathcal{G}.
\end{split}
\end{equation*}
Therefore we have the following equation
\begin{equation}\label{eq:rq}
    \sum_{i=1}^k\frac{\mathrm{Cut}(C_i,\bar{C}_i)}{\mathrm{mvol}(C_i)}=\sum_{i=1}^k\frac{\int|\nabla\mathbbm{1}_{C_i}|^2}{\int\mathbbm{1}_{C_i}^2}=\sum_{i=1}^k\frac{\left<\mathbbm{1}_{C_i},\Delta\mathbbm{1}_{C_i}\right>}{\left<\mathbbm{1}_{C_i},\mathbbm{1}_{C_i}\right>}.
\end{equation}
\end{lemma}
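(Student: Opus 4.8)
The plan is to verify each of the four stated identities directly from the definitions, and then assemble them into the displayed equation~\eqref{eq:rq} by elementary manipulation.

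First I would establish the two identities involving the indicator function $\mathbbm{1}_{C_i}$. For $\mathrm{mvol}(C_i)=\int\mathbbm{1}_{C_i}^2$, I simply expand the integral: $\int\mathbbm{1}_{C_i}^2=\sum_{x\in V}\mathbbm{1}_{C_i}(x)^2 m_x=\sum_{x\in C_i}m_x=\mathrm{mvol}(C_i)$, using that $\mathbbm{1}_{C_i}(x)^2=\mathbbm{1}_{C_i}(x)\in\{0,1\}$. For the cut identity, I would compute $\int|\nabla\mathbbm{1}_{C_i}|^2$ by unwinding the definition of the gradient: $|\nabla f(x)|^2=\sum_{y\in V}(f(y)-f(x))^2\frac{W_{xy}}{m_x}$, so $\int|\nabla f|^2=\sum_{x\in V}m_x\sum_{y\in V}(f(y)-f(x))^2\frac{W_{xy}}{m_x}=\sum_{x,y}(f(y)-f(x))^2 W_{xy}$. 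Setting $f=\mathbbm{1}_{C_i}$, the term $(f(y)-f(x))^2$ is $1$ exactly when exactly one of $x,y$ lies in $C_i$, and summing $W_{xy}$ over such ordered pairs double-counts $\mathrm{Cut}(C_i,\bar C_i)$; but since the sum is over ordered pairs it equals $2\,\mathrm{Cut}(C_i,\bar C_i)$ — so I need to be careful here about whether the paper's convention for $\nabla$ and the sum over $y\in V$ introduces a factor of $2$, or whether $W$ is taken to be a symmetric matrix with the diagonal handled so that the bookkeeping works out. This factor-of-two bookkeeping is the one place where I expect a subtlety, and I would resolve it by checking it is consistent with Lemma~\ref{lem:equiv}, i.e. with $\langle f,\Delta f\rangle$ computed via the matrix $L_M=M^{-1/2}(D-W)M^{-1/2}$.

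Next I would prove the two identities holding for all $f\in\mathcal{G}$. The identity $\langle f,f\rangle=\int f^2$ is immediate from the definition of the inner product with $g=f$. For $\langle f,\Delta f\rangle=\int|\nabla f|^2$, I would compute $\langle f,\Delta f\rangle=\int f\,\Delta f=\sum_{x\in V}m_x f(x)\sum_{y\in V}(f(x)-f(y))\frac{W_{xy}}{m_x}=\sum_{x,y}f(x)(f(x)-f(y))W_{xy}$. Then a standard symmetrization argument — swapping the roles of $x$ and $y$ and using $W_{xy}=W_{yx}$, then averaging the two expressions — converts this into $\frac12\sum_{x,y}(f(x)-f(y))^2 W_{xy}$, which matches the expression computed above for $\int|\nabla f|^2$ up to the same factor-of-two convention. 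Ensuring these two computations use a mutually consistent convention (so that $\langle\mathbbm{1}_{C_i},\Delta\mathbbm{1}_{C_i}\rangle$ really does equal $\mathrm{Cut}(C_i,\bar C_i)$, not twice it) is the main thing to get right; alternatively one can simply invoke Lemma~\ref{lem:equiv} and compute $\langle f,\Delta f\rangle$ as a quadratic form $f^\top M^{1/2} L_M M^{1/2} f = f^\top(D-W)f$, which is the familiar $\frac12\sum_{x,y}(f(x)-f(y))^2W_{xy}$, making the convention transparent.

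Finally, to obtain~\eqref{eq:rq}, I would divide the cut identity by the volume identity termwise: $\frac{\mathrm{Cut}(C_i,\bar C_i)}{\mathrm{mvol}(C_i)}=\frac{\int|\nabla\mathbbm{1}_{C_i}|^2}{\int\mathbbm{1}_{C_i}^2}$, which by the two ``for all $f$'' identities applied to $f=\mathbbm{1}_{C_i}$ equals $\frac{\langle\mathbbm{1}_{C_i},\Delta\mathbbm{1}_{C_i}\rangle}{\langle\mathbbm{1}_{C_i},\mathbbm{1}_{C_i}\rangle}$; summing over $i=1,\dots,k$ gives the chain of equalities. Note $\mathrm{mvol}(C_i)>0$ since each $m_x>0$ and $C_i$ is nonempty, so no division-by-zero issue arises. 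In short, the proof is a sequence of definitional expansions plus one symmetrization identity, and the only place demanding care is the consistent treatment of the summation convention in $\nabla$ and $\Delta$.
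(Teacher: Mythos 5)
The paper offers no proof of this lemma --- it is declared trivial and omitted --- and your direct verification from the definitions is precisely the argument the authors intend; your proposal is correct and complete. The factor-of-two subtlety you flag is real and is in fact a genuine inconsistency in the lemma as literally stated: with $\nabla f=((f(y)-f(x))\sqrt{W_{xy}/m_x})_{y\in V}$ and $\int f=\sum_x f(x)m_x$, one gets $\int|\nabla f|^2=\sum_{x,y}(f(x)-f(y))^2W_{xy}$, which equals $2f^\top(D-W)f=2\left<f,\Delta f\right>$, and for $f=\mathbbm{1}_{C_i}$ equals $2\,\mathrm{Cut}(C_i,\bar C_i)$ (the sum ranges over ordered pairs, so each crossing edge is counted twice, whereas the paper's $\mathrm{Cut}(A,B)=\sum_{i\in A,j\in B}W_{ij}$ counts it once). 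Hence the middle member of \eqref{eq:rq} is twice the outer two unless one adopts a convention summing over unordered pairs or inserts a factor $1/\sqrt2$ into $\nabla$. Your proposed resolution --- anchoring everything to Lemma \ref{lem:equiv}, which gives $\left<f,\Delta f\right>=f^\top(D-W)f=\frac12\sum_{x,y}(f(x)-f(y))^2W_{xy}$ and therefore $\left<\mathbbm{1}_{C_i},\Delta\mathbbm{1}_{C_i}\right>=\mathrm{Cut}(C_i,\bar C_i)$ and $\left<\mathbbm{1}_{C_i},\mathbbm{1}_{C_i}\right>=\mathrm{mvol}(C_i)$ --- is the right one: it establishes the equality between the first and third members of \eqref{eq:rq}, which is the only identity the rest of the paper actually uses.
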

The equations in this lemma are trivial to verify and we omit the proof here to save space.

\subsection{Weighted Laplacian Method}
We are inspired by the spectral methods, which first relaxes the discreteness condition in the optimization problem such as in the normalized cut and the ratio cut to solve them, then re-converts a partition from the real-valued solution. Accordingly, the weighted Laplacian method also first aims to solve the relaxation version of weighted cut problem, then its global optimal solution is produced. We present the condition to achieve its minimum values using the variational approach, based on the Rayleigh quotient presentation as follows.

Rayleigh quotient $\mathfrak{R}$ is a functional on $\mathcal{G}$ defined as following
\begin{equation*}
    \mathfrak{R}(f)=\frac{\left<f,\Delta f\right>}{\left<f,f\right>}\quad\mathrm{for~all~}f\in\mathcal{G},
\end{equation*}
and the weighted cut can be presented as a functional $\mathfrak{L}$ of indicating functions $f_1,f_2,\ldots,f_k$ shown as following
\begin{equation*}
    \mathfrak{L}(f_1,f_2,\ldots,f_k)=\sum_{i=1}^k\mathfrak{R}(f_i).
\end{equation*}
Relaxing the discreteness condition we have the relaxed version of weighted cut problem. In order to minimize functional $\mathfrak{L}$, from Euler-Lagrange equation with several functions we obtain that
\begin{equation*}
    \frac{\partial \mathfrak{L}}{\partial f_i}=\frac{\partial\mathfrak{R}(f_i)}{\partial f_i}=0\quad\mathrm{for~all}~i=1,\ldots,k,
\end{equation*}
therefore we calculate the gradient of Rayleigh quotient. Briefly, we regarded the functional $\mathfrak{I}_f:g\mapsto \left<g,f\right>$ induced by function $f$ as function $f$ itself.
For all $f\in\mathcal{G}$, we denote $P=\left<f,\Delta f\right>$ and $Q=\left<f,f\right>$, then we have
\begin{equation*}
    \begin{split}
        \frac{\partial \mathfrak{R}(f)}{\partial f}&=\frac{P'Q-PQ'}{Q^2}.
    \end{split}
\end{equation*}
Since 
\begin{equation*}
    P'=\frac{\partial \left<f,\Delta f\right>}{\partial f}=2\Delta f,\quad Q'=\frac{\partial\left<f,f\right>}{\partial f}=2f,
\end{equation*}
and we have
\begin{equation*}
\begin{split}
    \frac{\partial \mathfrak{R}(f)}{\partial f}&=\frac{2\Delta f\cdot\left<f,f\right>-2\left<f,\Delta f\right>\cdot f}{\left<f, f\right>^2}\\
    &=(\Delta - \frac{\left<f,\Delta f\right>}{\left<f,f\right>}I_d)\frac{2f}{\left<f,f\right>}=0
\end{split}
\end{equation*}
where $I_d$ is the identity operator in $\mathcal{G}$, thus $\frac{2f}{\left<f,f\right>}$ is the eigenfunction of $\Delta$, with the corresponding eigenvalue $\frac{\left<f,\Delta f\right>}{\left<f,f\right>}$. Therefore, $f$ is the eigenfunction of $\Delta$.

Since the weighted Laplacian $\Delta$ is equivalent to the weighted Laplacian matrix $L_M$ (see lemma \ref{lem:equiv}), we only need to compute the eigenvectors of weighted Laplacian matrix. Moreover, since $\Delta$ (and $L_M$) is Hermitian, the eigenfunctions (and eigenvectors) are orthogonal to each other, thus the first $k$ smallest eigenfunctions exactly satisfy the orthogonality of indicating functions, as well as solving the relaxation version of weighted cut problem. Finally, in the practical situation, we need to re-convert the real-valued solution to a partition of $G$. For example, the weighted spectral algorithm presented in next section.

\section{Two Theoretical Applications}\label{sec:thy}
We first show the equivalence between relaxed balanced minimum cut problem and relaxed weighted cut problem, as well as the equivalence between weighted cut problem and initial clustering problem that arises in the middle stage of graph partitioning algorithms under a multilevel graph structure. The initial clustering algorithm based on weighted Laplacian method is proposed in later.

\subsection{Equivalence Between Balanced Minimum Cut Problem and Weighted Cut Problem}
As the special cases of weighted cut, existing objective functions like the ratio cut and the normalized cut, in graph partitioning problems have been proposed decades ago and their practical development has been so successful that various application areas rely heavily on them \cite{saxena2017review}. One motivation for constructing these objective functions is that the optimal solution of the traditional minimum cut problem may lead to an unnatural bias especially for partitioning out small sets of points \cite{shi2000normalized}. 
Although a strong and natural relation between the weighted cut and the balanced minimum cut has not been revealed up till now, it is still meaningful in theory to provide a deep understanding for the weighted cut problem in variational language via establishing an equivalence relation from the PDEs point of view.
 
We first give the balance condition for the indicating function, shown as follows,
\begin{equation}\label{eq:bal}
    \int \mathbbm{1}_{C_i}^2 = \int \frac{1}{k}\quad\textrm{for~all~} i=1,2,\ldots,k,
\end{equation}
which means the integral average of $\mathbbm{1}_{C_i}^2$ is $\frac{1}{k}$ for each $i$. Since $\mathbbm{1}_{C_i}^2=\mathbbm{1}_{C_i}$, the partition $\{C_1,C_2,\ldots,C_k\}$ can be regarded as a balanced partition in a sense. By relaxing the discreteness condition, we consider the balanced minimum cut problem defined below
\begin{equation}\label{eq:bal-min-cut}
        \min_{f_i\in\mathcal{G}} \sum_{i=1}^k \left<f_i,\Delta f_i\right>\quad\mathrm{s.t.}~\left<kf_i^2-1,1\right>=0.
\end{equation}
\begin{theorem}
The balanced minimum cut problem is equivalent to the weighted cut problem.
\end{theorem}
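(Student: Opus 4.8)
The plan is to exploit the scale invariance of the Rayleigh quotient $\mathfrak{R}$ together with the observation that the balance constraint in \eqref{eq:bal-min-cut} merely fixes the scale that $\mathfrak{R}$ ignores. Recall from Lemmas \ref{lem:equiv} and \ref{lem:ray} that $\langle\cdot,\cdot\rangle$ is bilinear and $\Delta$ is linear, so $\mathfrak{R}(\lambda f) = \lambda^2\langle f,\Delta f\rangle / (\lambda^2\langle f,f\rangle) = \mathfrak{R}(f)$ for every scalar $\lambda\neq 0$. First I would unpack the constraint: $\langle k f_i^2 - 1, 1\rangle = 0$ is equivalent to $k\langle f_i,f_i\rangle = \langle 1,1\rangle = \int 1 = \mathrm{mvol}(V)$, i.e.\ $\langle f_i,f_i\rangle = c$ with $c := \tfrac1k\mathrm{mvol}(V) > 0$ for all $i$. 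In particular every tuple feasible for \eqref{eq:bal-min-cut} has all $f_i\neq 0$, and by Lemma \ref{lem:ray} (using $\int\mathbbm{1}_{C_i}^2 = \mathrm{mvol}(C_i)$) the indicating functions of a partition meet this constraint exactly when the balance condition \eqref{eq:bal} holds, so \eqref{eq:bal-min-cut} really is the relaxation of the discrete balanced cut $\min\sum_i\langle\mathbbm{1}_{C_i},\Delta\mathbbm{1}_{C_i}\rangle$ over partitions obeying \eqref{eq:bal}. On the feasible set of \eqref{eq:bal-min-cut} the objective then factors through $\mathfrak{L}$:
\begin{equation*}
  \sum_{i=1}^k \langle f_i,\Delta f_i\rangle \;=\; c\sum_{i=1}^k \frac{\langle f_i,\Delta f_i\rangle}{\langle f_i,f_i\rangle} \;=\; c\,\mathfrak{L}(f_1,\dots,f_k).
\end{equation*}

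Next I would match this with the relaxed weighted cut problem $\min\mathfrak{L}(g_1,\dots,g_k)$ over nonzero $g_i\in\mathcal{G}$. Given any such tuple, set $f_i := \sqrt{c}\,g_i/\sqrt{\langle g_i,g_i\rangle}$; then $\langle f_i,f_i\rangle = c$, so $(f_1,\dots,f_k)$ is feasible for \eqref{eq:bal-min-cut}, and $\mathfrak{R}(f_i) = \mathfrak{R}(g_i)$ by scale invariance, hence $\mathfrak{L}(f_1,\dots,f_k) = \mathfrak{L}(g_1,\dots,g_k)$. Conversely any tuple feasible for \eqref{eq:bal-min-cut} is a legal input to the weighted cut problem whose $\mathfrak{L}$ value is $1/c$ times its balanced objective. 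Together with the displayed identity this shows the optimal value of \eqref{eq:bal-min-cut} equals $c$ times the optimal value of the relaxed weighted cut problem, and that a tuple solves \eqref{eq:bal-min-cut} if and only if it solves the relaxed weighted cut problem up to positive coordinate-wise rescaling; so the two relaxations have the same minimizers up to scale and yield the same partition once the real-valued solution is re-converted, which is the asserted equivalence.

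The step I expect to demand the most care is pinning down exactly what ``equivalent'' should mean and keeping the bookkeeping honest: the correspondence between optimizers is a bijection only modulo positive rescaling of each coordinate, and one must verify that whatever auxiliary structure one attaches to the relaxed weighted cut problem --- orthogonality $\langle f_i,f_j\rangle = 0$ of the components, or the eigenfunction characterization derived from the Euler-Lagrange equation in Section \ref{sec:prelim} --- is preserved by that rescaling. It is, since multiplying each $f_i$ by a positive constant alters neither the orthogonality relations nor the property of being an eigenfunction of $\Delta$ (equivalently of $L_M$, by Lemma \ref{lem:equiv}). Everything else is immediate from bilinearity of $\langle\cdot,\cdot\rangle$ and the identities in Lemma \ref{lem:ray}.
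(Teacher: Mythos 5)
Your proof is correct, but it takes a genuinely different route from the paper's. The paper argues via the Euler--Lagrange equation with Lagrange multipliers: it shows that the critical points of the constrained problem \eqref{eq:bal-min-cut} satisfy $\Delta f_i = \lambda_i' f_i$, observes that the relaxed weighted cut problem in section \ref{sec:prelim} leads (again by a variational computation) to the same eigenfunction condition for $\Delta$ (equivalently, the eigenvectors of $L_M$ by Lemma \ref{lem:equiv}), and declares the two problems equivalent because both reduce to the same eigenproblem. You instead compare the two optimization problems directly: you unpack the constraint $\left<kf_i^2-1,1\right>=0$ into $\left<f_i,f_i\right>=c$ with $c=\mathrm{mvol}(V)/k$, use this to factor the balanced objective as $c\,\mathfrak{L}(f_1,\ldots,f_k)$ on the feasible set, and then exploit the scale invariance of the Rayleigh quotient to exhibit an explicit value-preserving correspondence (up to positive coordinate-wise rescaling) between feasible points of the two problems. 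Your argument is more elementary and in one respect stronger: it relates the feasible sets and optimal values themselves rather than only the first-order conditions, so it establishes equivalence of the optimization problems without relying on the (unjustified in the paper) step that a critical point of the Lagrangian is a global minimizer. What the paper's route buys in exchange is a direct link to the computational procedure the rest of the paper depends on --- both problems are solved by the first $k$ eigenvectors of $L_M$ --- which your argument recovers only implicitly by noting that rescaling preserves the eigenfunction and orthogonality structure. Both treatments share the same loose end: without the orthogonality side conditions neither relaxed problem forces the $f_i$ to be distinct eigenfunctions, and you are at least as explicit as the paper in flagging that this auxiliary structure must be carried along.
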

We define the functional $\mathfrak{K}$ on $\mathcal{G}^k$ below
\begin{equation*}
    \mathfrak{K}(f_1,f_2,\ldots,f_k)= \sum_{i=1}^k\left<f_i,\Delta f_i\right>, 
\end{equation*}
from the Euler-Lagrange equation with constraints, there exists a constant $\lambda_i$ for each $i$ such that
\begin{equation}\label{eq:el}
    \begin{split}
        \frac{\partial \mathfrak{K}}{\partial f_i}+\frac{\partial}{\partial f_i}\sum_{i=1}^k \lambda_i\left<kf_i^2-1,1\right>=0,
    \end{split}
\end{equation}
since we have
\begin{equation*}
\begin{split}
        \frac{\partial \mathfrak{K}}{\partial f_i}=\frac{\partial \left<f_i,\Delta f_i\right>}{\partial f_i} = 2\Delta f_i,\quad
        \frac{\partial  \left<kf_i^2-1,1\right>}{\partial f_i}=2kf_i,
\end{split}
\end{equation*}
by equation \ref{eq:el}, we know that $\Delta f_i = \lambda_i' f_i$ holds for all $i=1,\ldots,k$ and for some constant $\lambda_i'$. 
Furthermore,the first k smallest eigenfunctions of $\Delta$ solve the balanced minimum cut problem in equation \ref{eq:bal-min-cut} for the reasons that solving the eigenfunctions of $\Delta$ can reduce to computing the eigenvectors of weighted Laplacian matrix $L_M$ and $L_M$ is a Hermitian matrix with its eigenvectors mutually orthogonal, thus exactly satisfying the orthogonality of the indicating functions. We have also presented the weighted Laplacian method in section \ref{sec:prelim}, which is similarly based on solving the eigenfunctions of weighted Laplacian, thus proving that solving the balanced minimum cut problem is equivalent to solving the weighted cut problem.

A remaining question might be why we choose to define the balance condition as $\int \mathbbm{1}_{C_i}^2=\int \frac{1}{k}$ in equation \ref{eq:bal} rather than the simpler form $\int \mathbbm{1}_{C_i}=\int \frac{1}{k}$. The main reason is that the later one would produce Laplace's equation $\Delta f=c$ for some constant $c$, with its trivial solution $f$ be a constant function when $c=0$ \cite{chung2005omega}, and cannot be re-converted to a meaningful partition of a graph. In a word, the existence of the equivalence relation, which has been proved in this subsection via the weighted Laplacian method, can also shed interesting light on the relation between the research of PDEs theroy and recent studies of popular graph problems, thus setting up a bridge to connect these two seemingly unrelated fields.
 
\subsection{Equivalence Between Initial Clustering Problem and Weighted Cut Problem}
The key step in the multilevel graph partitioning framework is to construct an initial partition on a coarsened graph obtained from the first coarsening step. As has been described in the related work, state-of-art graph partition algorithms with multilevel structure are totally heuristic, i.e., they fail to provide any qualitative guarantees for a partition result especially produced by the initial clustering phase. Moreover, although these existing algorithms using the multilevel strategy can indeed largely reduce time complexity of deriving a final partition, comparing with spectral clustering, they always lack in partition accuracy due to their intrinsic heuristic property that appears in the second stage. Additionally, considering that traditional spectral methods with the merit of high accuracy can not be universally transplanted into the initial clustering phase, it is necessary to apply our weighted Laplacian method to the initial clustering phase to make a good compromise between time efficiency and partition accuracy. Meanwhile, it is also worthwhile to mention that the initial clustering algorithm designed with our proposed method possesses robust theoretical support for their partition result due to the inspiration by graph Laplacian theory, and spectral clustering naturally degenerates to a special case of our weighted Laplacian method. In this subsection, we exhibit a rigorous proof about the equivalence between the weighted cut problem and the initial clustering problem, which can set up a bridge to guide us to design an initial clustering algorithm based on weighted Laplacian method.
 
Firstly, we introduce the definition of the initial clustering problem. Suppose $\tilde{G}$ is a coarsened graph of $G$, i.e. the vertex set of $\tilde{G}$ is a set of supernodes $V_{\tilde{G}}=\{ C_i\}_{i=1,\ldots,m}$ where $C_i=\{v_i^t\}_{t=1,\ldots,{j_i}}$ is a supernode. 
Therefore, we know $V_{\tilde{G}}$ forms a partition $\pi=\{ C_i\}_{i=1,\ldots,m}$ of the original graph $G$. For any initial clustering $\tilde{\pi}$ of $\tilde{G}$ in the middle stage of multilevel graph partitioning algorithms, its corresponding original partition of $G$ can be recovered by projecting the cluster of a supernode into the nodes it contained, and this recovered partition can be denoted as $\pi'$. Since $\tilde{\pi}$ is an initial clustering on $\tilde{G}$, we define that $\pi'$ is a coarser partition of $\pi$, i.e. if for all $P\in\pi$, there exists $P'\in\pi'$ such that $P\subset P'$, then we denote it as $\pi'\le\pi$. Thus the \textit{initial clustering problem} is defined as following
\begin{equation*}
    \min_{\pi'\le\pi}\mathrm{Ncut}(\pi').
\end{equation*}
Now we recall the weighted cut problem mentioned in \ref{sec:prelim}. 
 We regarded the coarsened graph $\tilde{G}$ as a doubly-weighted graph defined in \ref{def:dou}.  
 We denote the coarsened graph as $\tilde{G}=(\tilde{V},\tilde{M},\tilde{W})$. The weighted cut problem on $\tilde{G}$ is shown as follows
 \begin{equation*}\textstyle
     \min_{\tilde{\pi}}\mathrm{Wcut}(\tilde{\pi})
 \end{equation*}
 \begin{theorem}
 The weighted cut problem is equivalent to the initial clustering problem.
 \end{theorem}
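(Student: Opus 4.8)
The plan is to prove the equivalence by exhibiting a bijection between the feasible solutions of the two problems under which the two objectives agree term by term, so that the problems coincide up to relabeling and, via the weighted Laplacian method, share the same (relaxed) spectral solution.

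\emph{Step 1 (fixing the doubly-weighted structure on $\tilde G$).} I would first make the coarsening conventions explicit on $\tilde G=(\tilde V,\tilde M,\tilde W)$: the edge weight between supernodes is $\tilde W_{ij}=\sum_{v\in C_i,\,w\in C_j}W_{vw}$, and the vertex-weight matrix is $\tilde M=\tilde D$, where $\tilde D=\mathrm{diag}\{\tilde d_1,\dots,\tilde d_m\}$ is the coarsened degree matrix with $\tilde d_i=\sum_{t=1}^{j_i}d_{v_i^t}=\mathrm{vol}_G(C_i)$ --- exactly the rule the coarsening phase uses to assign degrees to supernodes. By Lemma \ref{lem:equiv} this is the case $M=D$, so $\mathrm{Wcut}$ on $\tilde G$ is literally $\mathrm{Ncut}$ on $\tilde G$: for any partition $\tilde\pi=\{\tilde C_1,\dots,\tilde C_k\}$ of $\tilde V$ we have $\mathrm{Wcut}(\tilde\pi)=\sum_{\ell}\mathrm{Cut}_{\tilde G}(\tilde C_\ell,\overline{\tilde C_\ell})/\mathrm{vol}_{\tilde G}(\tilde C_\ell)$.

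\emph{Step 2 (the bijection).} For a partition $\tilde\pi=\{\tilde C_1,\dots,\tilde C_k\}$ of the supernode set, put $C'_\ell:=\bigcup_{C_i\in\tilde C_\ell}C_i\subseteq V$; then $\pi'=\{C'_1,\dots,C'_k\}$ is a partition of $G$ in which every block $C_i$ of $\pi$ lies inside a single block, i.e. $\pi'\le\pi$. Conversely, given $\pi'\le\pi$, each block of $\pi'$ is a union of blocks of $\pi$ and hence determines a unique subset of $\tilde V$, and these subsets form the unique $\tilde\pi$ projecting to $\pi'$. So $\tilde\pi\mapsto\pi'$ is a bijection from the feasible set of the weighted cut problem on $\tilde G$ onto the feasible set $\{\pi':\pi'\le\pi\}$ of the initial clustering problem.

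\emph{Step 3 (objective identity).} The core computation is that, for each $\ell$,
\[
\mathrm{Cut}_{\tilde G}(\tilde C_\ell,\overline{\tilde C_\ell})=\mathrm{Cut}_G(C'_\ell,\overline{C'_\ell}),\qquad \mathrm{vol}_{\tilde G}(\tilde C_\ell)=\mathrm{vol}_G(C'_\ell).
\]
The cut identity holds because, by additivity of the coarsened edge weights and the fact that no supernode straddles the boundary of $\tilde C_\ell$, every original edge with one endpoint in $C'_\ell$ and the other outside is counted exactly once among the $\tilde W_{ij}$ with $C_i\in\tilde C_\ell$, $C_j\notin\tilde C_\ell$, while self-loops manufactured inside supernodes never cross a cut and drop out. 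The volume identity is $\mathrm{vol}_{\tilde G}(\tilde C_\ell)=\sum_{C_i\in\tilde C_\ell}\tilde d_i=\sum_{C_i\in\tilde C_\ell}\mathrm{vol}_G(C_i)=\mathrm{vol}_G(C'_\ell)$. Summing over $\ell$ gives $\mathrm{Wcut}(\tilde\pi)=\mathrm{Ncut}(\pi')$. Together with Step 2, the weighted cut problem on $\tilde G$ and the initial clustering problem are the same optimization problem up to the bijective relabeling $\tilde\pi\leftrightarrow\pi'$; in particular their optima coincide, and by the weighted Laplacian method of Section \ref{sec:prelim} both are solved in the relaxed sense by the $k$ smallest eigenvectors of $L_{\tilde M}=\tilde M^{-1/2}(\tilde D-\tilde W)\tilde M^{-1/2}$.

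I expect the main obstacle to be purely the careful handling of coarsening artifacts --- internal edges turning into self-loops, and the precise meaning of ``degree of a supernode'' --- since the equivalence is sensitive to the choice of $\tilde M$ (only $\tilde M=\tilde D$ makes $\mathrm{mvol}_{\tilde G}$ match $\mathrm{vol}_G$ on recovered clusters), together with verifying that the projection map is a genuine bijection onto $\{\pi'\le\pi\}$ rather than merely onto or into it. Once these points are nailed down, the term-by-term identity, and hence the equivalence, are routine.
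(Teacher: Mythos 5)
Your proof is correct and follows essentially the same route as the paper: project the supernode partition $\tilde\pi$ down to $\pi'\le\pi$ and verify term by term that $\mathrm{Cut}_{\tilde G}(\tilde C_\ell,\overline{\tilde C_\ell})=\mathrm{Cut}_G(C'_\ell,\overline{C'_\ell})$ and $\mathrm{mvol}_{\tilde G}(\tilde C_\ell)=\mathrm{vol}_G(C'_\ell)$, so that $\mathrm{Wcut}(\tilde\pi)=\mathrm{Ncut}(\pi')$. You are somewhat more careful than the paper in making the projection an explicit bijection onto $\{\pi':\pi'\le\pi\}$ and in pinning down that the equivalence hinges on the coarsening convention $\tilde M=\tilde D$, but the core computation is the same.
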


\subsubsection{Equivalence}
For all partitions $\tilde{\pi}=\{B_1,B_2,\ldots,B_k\}$ on the  coarsened graph $\tilde{G}$ where $B_i=\{C_i^{j_i}\}$, we have
\begin{equation*}
    \begin{split}
        \mathrm{Cut}(B_p,B_q)&=\sum_{t=1}^{j_p}\sum_{s=1}^{j_q}\mathrm{Cut}(C_p^{t},C_q^{s})\\
        \mathrm{mvol}(B_p)&=\sum_{t=1}^{j_p}\mathrm{vol}(C_p^{t}).
    \end{split}
\end{equation*}
We project the partition $\tilde{\pi}$ into the original graph, denoted as $\pi'$, i.e. $\pi'=\{B_i'\}$ where $B_i'=\cup_{t=1}^{j_i}C_i^{t}$, then we can derive that
\begin{equation*}
\begin{split}
        \mathrm{Ncut}(\pi')&=\sum_{p\neq  q}\frac{\mathrm{Cut}(B_p',B_q')}{\mathrm{vol}(B_p')}\\
        &=\sum_{p\neq q}\frac{\sum_{x\in B_p',y\in B_q'}W_{xy}}{\sum_{x\in B_p'}d_x}\\
        &=\sum_{p\neq q}\frac{\sum_{t=1}^{j_p}\sum_{s=1}^{j_q}\sum_{x\in C_p^{t},y\in C_q^{s}}W_{xy}}{\sum_{t=1}^{j_p}\sum_{x\in C_p^{j_t}} d_x}\\
        &=\sum_{p\neq q}\frac{\sum_{t=1}^{j_p}\sum_{s=1}^{j_q}\mathrm{Cut}(C_p^t,C_q^s)}{\sum_{t=1}^{j_p}\mathrm{vol}(C_p^t)}\\
        &=\sum_{p\neq q}\frac{\mathrm{Cut}(B_p,B_q)}{\mathrm{mvol}(B_p)}\\
        &=\mathrm{Wcut}(\tilde{\pi}).
\end{split}
\end{equation*}
 
Hence we have proved that the initial clustering problem is equivalent to the weighted cut problem, which can be solved by the weighted Laplacian method. 

\subsubsection{Weighted Spectral Algorithm}\label{sec:algo}

We describe our proposed weighted spectral algorithm based on weighted Laplacian method via the above equivalence relation. 

Algorithm \ref{algo:coarsening} takes as its inputs the original graph $G$, a partition $\pi$ and the number of clusters $k$, then outputs the $k$-clustering result of initial clustering. 
Algorithm \ref{algo:coarsening} gives the pseudo-code of our algorithm. Line 1-8 performs the summation operation on rows and columns of the matrix, which can be executed very efficiently in a parallel system.
 
Another part of algorithm \ref{algo:coarsening} (corresponds the line 9-14 in the pseudo-code)  produces an initial clustering of the coarsened graph with $m$ vertices, based on weighted Laplacian methods. Like traditional spectral clustering, we compute the first $k$ smallest eigenvectors of the weighted Laplacian matrix $L_M$ to obtain a matrix $U\in\mathbb{R}^{m\times k}$ which takes these eigenvectors as its columns, then run $k$-means algorithm on the rows of $U$ to obtain a $k$-clustering result.

\begin{algorithm}[htbp]
\caption{Weighted Spectral Algorithm for Initial Clustering Based on Weighted Laplacian Method}
\label{algo:coarsening}
\begin{algorithmic}[1]
\REQUIRE $G, \pi, k$
\ENSURE an initial clustering result $\{C_i\}_{i=1,\ldots,k}$
\STATE $(V,M,W)\gets G$
\STATE Compute the degree matrix $D$ of $W$
\FOR {$c_i$ is a cluster in the partition $\pi$}
\STATE $\{v_i^1<v_i^2<\ldots<v_i^{j_i}\}\gets c_i$
\STATE $\tilde{W}(:,i)\gets \sum_{k=1}^{j_i}W(:,v_i^{k})$
\STATE $\tilde{W}(i,:)\gets \sum_{k=1}^{j_i}W(v_i^{k},:)$
\STATE $\tilde{M}(i,i)\gets \sum_{k=1}^{j_i}D(v_i^{k},v_i^{k})$
\ENDFOR
\STATE Compute the degree matrix $\tilde{D}$ of $\tilde{W}$
\STATE $L_M\gets \tilde{M}^{-1/2}(\tilde{D}-\tilde{W})\tilde{M}^{-1/2}$
\STATE $m\gets\mathrm{dim}(\tilde{W})$
\STATE Conduct the eigendecomposition $L_Mx=\lambda x$, the eigenvectors corresponding to the $k$ smallest eigenvalues are $\{x_i\}_{i=1,\ldots,k}$, let $U:=(x_i)\in\mathbb{R}^{m\times k}$.
\STATE Execute the $k$-means algorithm on the rows of $U=(u_j)$ to obtain a clustering $B_i$
\STATE For all vertex, $v\in C_i$ if $u_j$ contains $v$ and $u_{j}\in B_i$
\RETURN $\{C_i\}_{i=1,\ldots,k}$
\end{algorithmic}
\end{algorithm}

\section{Experiments}\label{sec:expr}

\begin{figure*}[htbp]
\centering
\subfigure[8 clusters]{\includegraphics[width=0.32\textwidth]{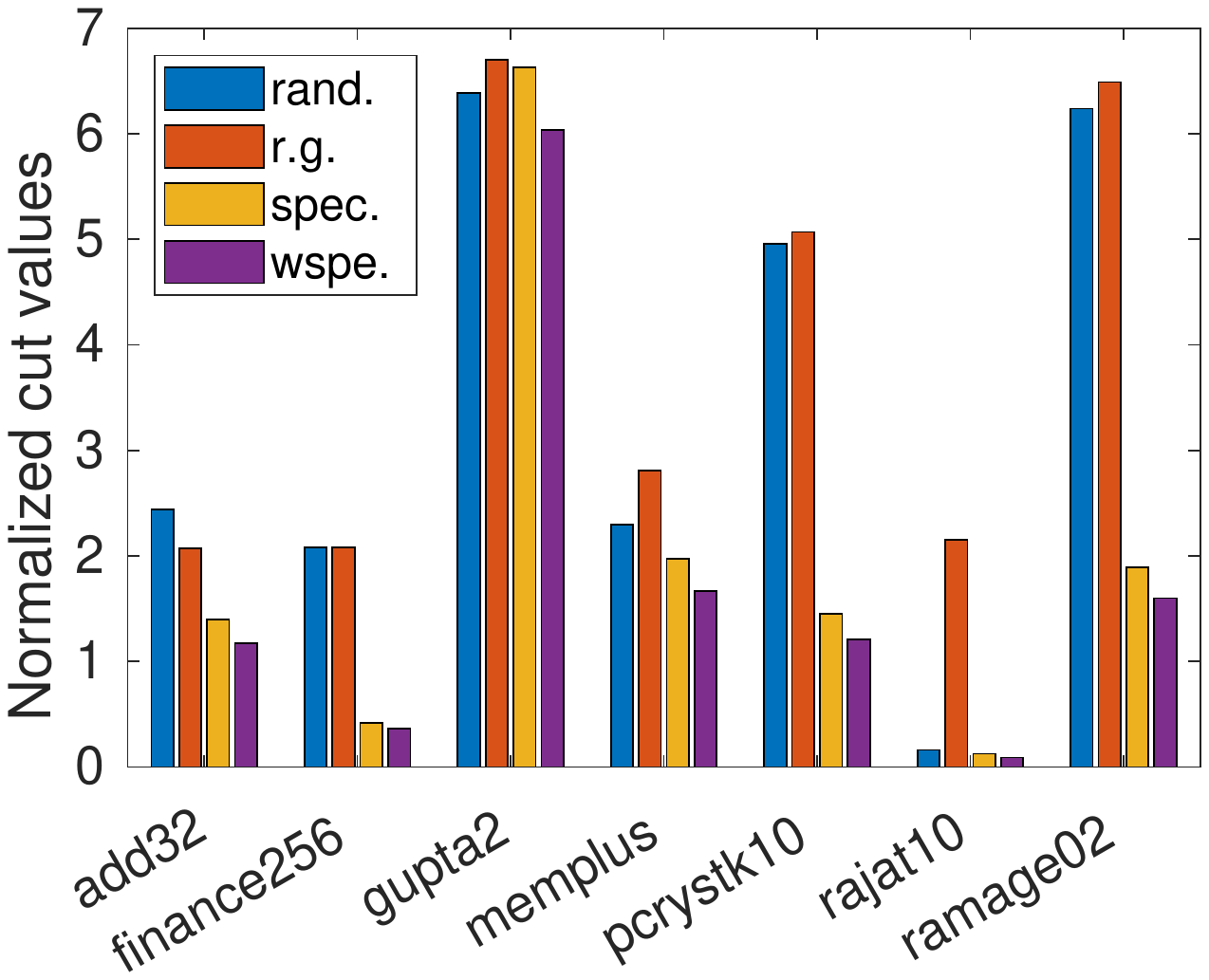}}
\subfigure[16 clusters]{\includegraphics[width=0.32\textwidth]{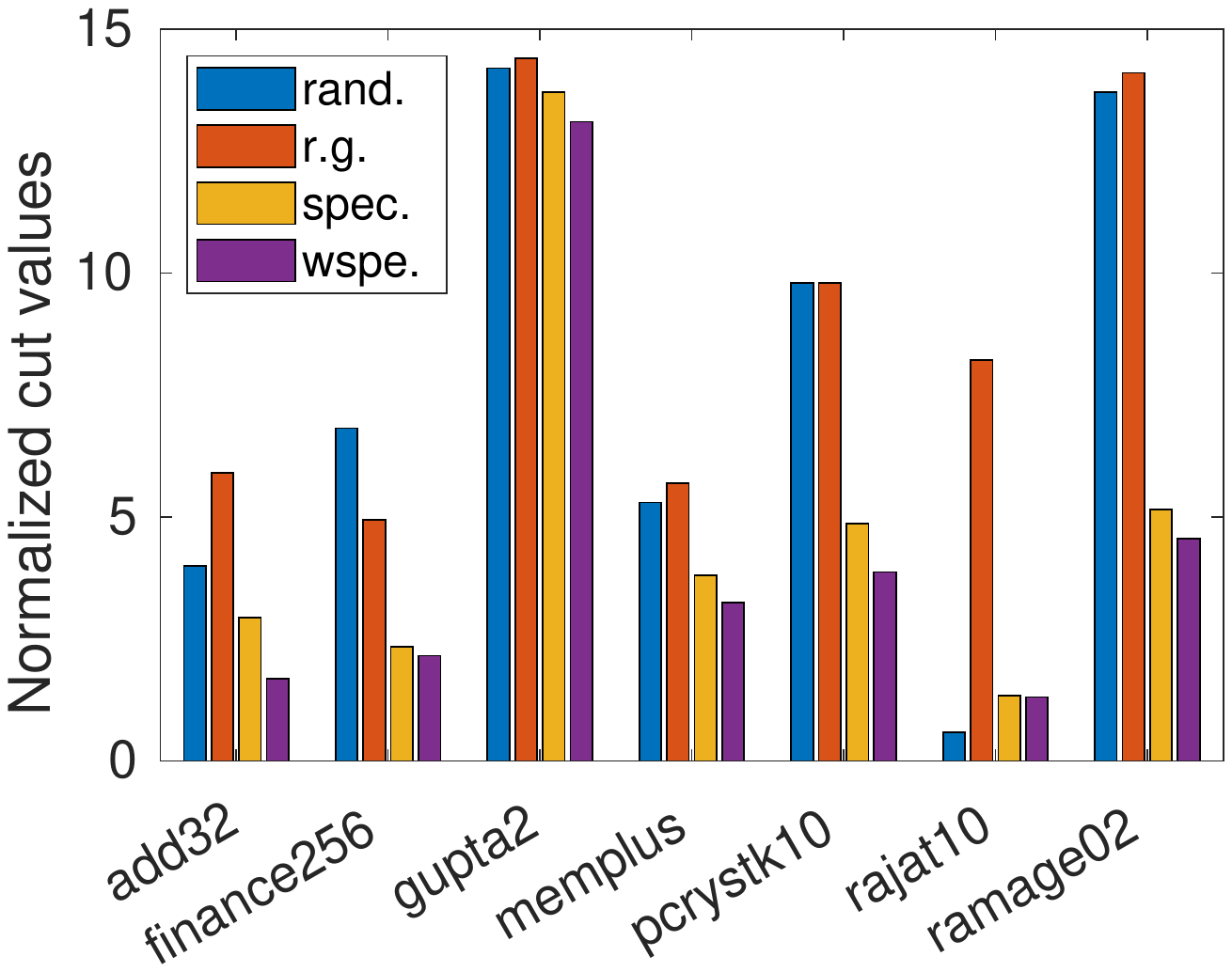}}
\subfigure[32 clusters]{\includegraphics[width=0.32\textwidth]{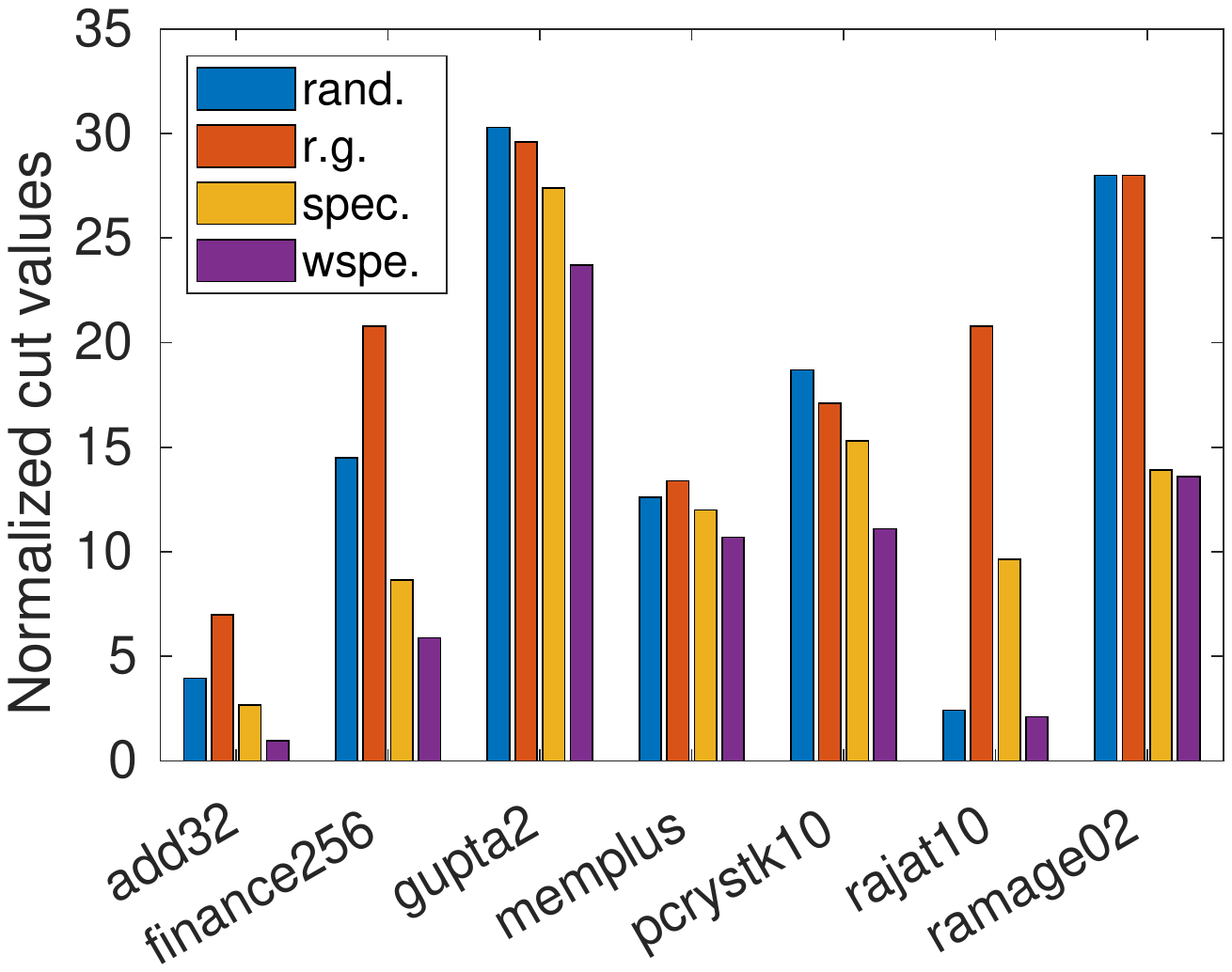}}
\caption{Clustering results on the real-world datasets (lower cut values are better). Our weighted spectral (wspe.) outperforms existing algorithms. Each result in the bar charts is the average of running for 10 times.}
\label{fig:dataset}
\end{figure*}

\begin{table*}[htbp]
    \centering\caption{Normalized cut values with different initial clustering algorithms (lower cut values are better). Our weighted spectral algorithm always performs the best result on different scales of cluster tasks. Each result is the average for running $10$ times.}
    \begin{tabular}{|l|r|r|r|r|r|r|r|r|}\hline
        \# clusters          &4      &8      &16     &32     &64     &128    &256    &512    \\\hline
         random (rand.)             &.393   &.529   &2.39   &3.96   &15.7   &37.3   &94.3   &251    \\\hline
         region-growing (r.g.)     &.162   &1.44   &3.33   &7.00   &18.1   &34.0   &85.0   &186    \\\hline
         spectral (spec.)           &.345   &.269   &1.70   &4.81   &10.5   &30.0   &80.5   &181    \\\hline
         \textbf{weighted spectral (wspe.)}  &.158   &.214   &1.69   &2.34   &8.70   &22.6   &70.0   &177    \\\hline
    \end{tabular}
    \label{tab:values}
\end{table*}

\begin{table}[htbp]
    \centering\caption{Sizes of test graphs}
    \begin{tabular}{|c|l|r|r|}\hline
      No. & Graph name          &\# nodes      &\# edges           \\\hline
        1 & add32             &4960   &9462   \\\hline
        2& finance256     &37376   &130560     \\\hline
        3& gupta2           &62064   &2093111   \\\hline
        4& memplus &17758 &54196   \\\hline
        5& pcrystk02&13965&477309\\\hline
        6& rajat10&30202&50101\\\hline
        7& ramage02&16830&1424761\\\hline
    \end{tabular}
    \label{tab:graphs}
\end{table}

In this section, we present some experimental results to show the effectiveness and robustness of our weighted spectral algorithm, which outperforms the other initial clustering algorithms in terms of normalized cut values. Our experiment framework is modified based on KaHIP \cite{sanders2013think}, which is an open-source framework written in C++ for multilevel graph partitioning. Moreover, the region-growing algorithm is utilized as the initial clustering algorithm in the framework.

Recall the initial clustering algorithms listed in section \ref{sec:related}, our compared experimental initial clustering algorithms contain region-growing (r.g.), which performs well than recursive bisection in practice, based on some previous reports \cite{predari2017comparison}. Besides, we also choose spectral clustering (spec.) as initial clustering algorithm to compared with our weighted spectral (wspe.), thus to show the progress of our weighted Laplacian method. As an extremely case, we choose the random initial clustering (rand.) as an experiment, in which every supernodes are labeled with the random labels from $1$ to $k$ where $k$ is the number of clusters. All experiments were compiled with complied with g++ 7.4.0, and performed for 10 times on a Linux machine with Intel (R) Xeon(R) Bronze 3104 CPU@1.70 GHz$\times$12 and 32 Gbyte of main memory, to obtain the average results.

We compare our algorithm to the existing algorithms. We run the partitioning tasks with different scale from 8 to 512 on the dataset add32, and the results are contained in table \ref{tab:values}, finally weighted spectral always returns the best results. Moreover, we run the partitioning tasks on different real-world datasets (see table \ref{tab:graphs}). Each result is the average of running the same experiment 10 times, and the results are plotted on the bar charts in figure \ref{fig:dataset}.

\section{Conclusion}\label{sec:conc}
We studied the weighted cut problem and showed it can be solved based on our proposed weighted Laplacian method, which is a powerful tool for analyzing the doubly-weighted graph. We show that the relaxed balanced minimum cut problem is equivalent to the relaxed weighted cut problem, as well as the initial clustering problem that arises in the middle stage in the graph partitioning algorithm with multilevel structure, thus all of these can be solved by our weighted Laplacian method. Our experimental algorithm based on weighted Laplacian method outperforms the existing techniques in terms of the normalized cut values. 

We have proved that the relaxed weighted cut problem is equivalent to the relaxed balanced minimum cut problem, and our work shows the potential applications of PDEs methods in the variants of the weighted cut problem (e.g. the normalized cut problem and the ratio cut problem).  In the future, we would like to attempt to utilize the PDEs methods such as the finite element method and Monte Carlo method to handle some traditional graph partitioning problems in the graph analysis.

\section{Acknowledgements}
The work of this article is supported by Cyber Security Project of Ministry of Science and Technology of China, Research on Key Technologies of Internet of Things and Security Guarantee of Smart City (2018YFB080340), National Natural Science Fund for distinguished young scholars (No. 61625205), Key Research Projects in Frontier Science of the Chinese Academy of Sciences (QYZDY-SSW-JSC002).


\end{document}